\newtheorem{theorem}{Theorem}
\newcommand\fnurl[2]{%
\href{#2}{#1}\footnote{\url{#2}}%
}
\begin{document}

\title{On the usage of the probability integral transform to reduce the complexity of multi-way fuzzy decision trees in Big Data classification problems}

\author{\IEEEauthorblockN{Mikel Elkano\IEEEauthorrefmark{1}\IEEEauthorrefmark{2}\IEEEauthorrefmark{3},
Mikel Uriz\IEEEauthorrefmark{1}\IEEEauthorrefmark{2},
Humberto Bustince\IEEEauthorrefmark{1}\IEEEauthorrefmark{2}\IEEEauthorrefmark{3},
Mikel Galar\IEEEauthorrefmark{1}\IEEEauthorrefmark{2}\IEEEauthorrefmark{3}}
\IEEEauthorblockA{\IEEEauthorrefmark{1}Department of Automatics and Computation, Public University of Navarre, 31006 Pamplona, Spain}
\IEEEauthorblockA{\IEEEauthorrefmark{2}GIARA, Navarrabiomed, Complejo Hospitalario de Navarra (CHN), Universidad P\'ublica de Navarra (UPNA), IdiSNA\\Irunlarrea 3, 31008 Pamplona, Spain}
\IEEEauthorblockA{\IEEEauthorrefmark{3}Institute of Smart Cities, Public University of Navarre, 31006 Pamplona, Spain \\
Emails: \{mikel.elkano, mikelxabier.uriz, bustince, mikel.galar\}@unavarra.es}}

\maketitle

\begin{abstract}
We present a new distributed fuzzy partitioning method to reduce the complexity of multi-way fuzzy decision trees in Big Data classification problems. The proposed algorithm builds a fixed number of fuzzy sets for all variables and adjusts their shape and position to the real distribution of training data. A two-step process is applied : 1) transformation of the original distribution into a standard uniform distribution by means of the probability integral transform. Since the original distribution is generally unknown, the cumulative distribution function is approximated by computing the q-quantiles of the training set; 2) construction of a Ruspini strong fuzzy partition in the transformed attribute space using a fixed number of equally distributed triangular membership functions. Despite the aforementioned transformation, the definition of every fuzzy set in the original space can be recovered by applying the inverse cumulative distribution function (also known as quantile function). The experimental results reveal that the proposed methodology allows the state-of-the-art multi-way fuzzy decision tree (FMDT) induction algorithm to maintain classification accuracy with up to 6 million fewer leaves.
\end{abstract}

\begin{IEEEkeywords}
Fuzzy Decision Trees; Probability Integral Transform; Quantile Function; MapReduce; Apache Spark; Big Data
\end{IEEEkeywords}

\IEEEpeerreviewmaketitle

\section{Introduction}\label{sec:intro}

Decision trees (DTs)~\cite{Quinlan1993} are popular non-parametric supervised machine learning tools used for classification and regression tasks. They have been applied in a wide variety of problems such as finance~\cite{Chen2011}, image classification~\cite{Yang2003}, intrusion detection~\cite{Li2005}, astronomy~\cite{Ball2006}, bioinformatics~\cite{Che2011}, or medicine~\cite{Sanz2017}. The main feature of DTs is the ability to explain the reasoning behind their decisions by means of tree-like graphs. Each node is a question or a test on an attribute (e.g. is $x>0.5$?), each branch represents the answer or the outcome of the test, and terminal nodes (or leaves) contain the final decisions. Trees are usually built by applying a top-down strategy called \emph{recursive partitioning} \cite{Quinlan1993}, in which input data is recursively partitioned (split) into two or more sub-spaces that increase the homogeneity of class distributions. In the case of continuous attributes, the tree induction algorithm can apply either a brute-force search to test all possible cut points or a discretization process to split the attribute domain into a discrete set of intervals (also called \emph{bins}). Since brute-force solutions might be too computationally heavy when dealing with Big Data problems, discretization strategies are usually applied to speed up the algorithm and reduce the model complexity.

Fuzzy logic~\cite{Zadeh1965} has proven to be an effective way to enhance the classification performance of machine learning algorithms when dealing with uncertainty, including decision trees \cite{Yuan1995,Janikow1998,Sanz2012}. In fuzzy decision trees (FDTs), a continuous attribute is characterized by a fuzzy variable instead of a discrete set of intervals. Therefore, a given input value might belong to one or more fuzzy sets with a certain membership degree and activate multiple branches at the same time. This way, the FDT is able to create soft decision boundaries and handle smooth transitions between adjacent intervals. In addition to classification performance, fuzzy logic allows the user to translate the whole tree into a number of IF-THEN rules composed of human-readable linguistic labels such as "IF Temperature is \emph{High} AND Sugar level is \emph{Very low} THEN Class = Sick", which might improve the interpretability of the model.

In the context of Big Data, the excessive time and space requirements of FDTs seriously affect the scalability of these algorithms. Segatori et al. came up with a MapReduce solution consisting of a new fuzzy partitioning method (discretizer) and a distributed FDT learning scheme \cite{Segatori2018}. The discretizer generates a strong triangular fuzzy partition for each continuous attribute based on fuzzy entropy, which is then used to construct the tree. The authors proposed two versions of FDT that differ in the splitting strategy: the binary (or two-way) FDT (FBDT) and the multi-way FDT (FMDT). The former recursively partitions the attribute space into two subspaces (child nodes), while the latter might generate more than two subspaces. Although accurate, the solution of Segatori et al. generally builds large and complex trees containing hundreds of thousands of leaves.

In this work, we present a new fuzzy partitioning method that reduces the complexity of trees constructed by the FMDT scheme, in terms of both the number of fuzzy sets used per variable and the number of leaves. The proposed algorithm applies the \emph{probability integral transform}~\cite{Angus1994,Quesenberry2004} to adjust a fixed number of fuzzy sets to the real distribution of the training data. This transformation allows the algorithm to convert the variables of the training set into (approximately) uniform random variables regardless of their original distribution. Next, the Ruspini strong fuzzy partitions \cite{Ruspini1969} are built in the new transformed dataset using equally distributed triangular membership functions. The resulting fuzzy sets are then used by the original FMDT to construct the tree.

In order to assess the benefits of our proposal, we carried out an empirical study using 4 Big Data classification problems available at UCI~\cite{Dua2017} and \fnurl{OpenML}{https://www.openml.org/search?type=data} repositories. We compared the accuracy rate and the model complexity of FMDT when using the original and the proposed fuzzy partitioning methods. The experimental results show a significant reduction in model complexity when applying our strategy.

This paper is organized as follows. Section \ref{sec:preliminaries} recalls the basics of the MapReduce algorithm and the Apache Spark framework and briefly describes the distributed solution of Segatori et al. to build FDTs for Big Data. In Section \ref{sec:proposal} we introduce the proposed fuzzy partitioning method. The experimental framework and the analysis of the results are shown in Sections \ref{sec:experimental-framework} and \ref{sec:experimental-study}, respectively. Finally, Section \ref{sec:conclusions} contains concluding remarks.

\section{Preliminaries}\label{sec:preliminaries}

In this section we recall some concepts about the MapReduce algorithm and the Apache Spark framework (Section \ref{ssec:mapreduce-spark}) and we briefly describe the distributed solution presented by Segatori et al. to build fuzzy decision trees for Big Data (Section \ref{ssec:fdts}).

\subsection{MapReduce and Apache Spark} \label{ssec:mapreduce-spark}

MapReduce is a programming paradigm~\cite{Dean2008} for processing large-scale datasets in a distributed fashion. It is composed of two stages called Map and Reduce, which are executed by the so-called \emph{mappers} and \emph{reducers}, respectively:
\begin{enumerate}
\item \emph{Map stage}: input data is partitioned into several logical splits that are associated with certain physical blocks (preferably with local ones, in favor of data locality). Each split is then processed by a single mapper on a given computing node. The mapper transforms the input data into multiple key-value pairs and calls the \emph{map()} function (defined by the user) for each pair. The result of this function is another key-value pair that is part of the so-called \emph{intermediate data}. Finally, this intermediate data is prepared to be sent to the reducers by applying the following operations: 
\begin{enumerate}
\item Sorting and Merging: outputs are sorted by key and all the values corresponding to the same key are merged in a list of values.
\item Partitioning: a target reducer is selected for each key.
\item Shuffle: previous intermediate data is copied to the reducers.
\end{enumerate}
\item \emph{Reduce stage}: the reducer is responsible for aggregating the outputs of the mappers when they all have finished. To this end, all the key-value pairs received from the mappers are sorted and merged by key. Then, the \emph{reduce()} function (defined by the user) is called for every single key, where all its values are aggregated. Finally, the reducer returns the final result for each key.
\end{enumerate}

Spark~\cite{Zaharia2010} was introduced as a generalization and an extension of the MapReduce paradigm. It is built around the concept of \emph{Resilient Distributed Datasets} (RDDs)~\cite{Zaharia2012}, which represent distributed immutable data (partitioned data) and lazily evaluated operations (\emph{transformations}). The execution of a user-defined algorithm consists of a sequence of \emph{stages} composed of a number of transformations that are split into \emph{tasks}. One stage consists only of transformations that do not require any shuffling/repartitioning process (e.g., \emph{map} and \emph{filter} operations). Tasks are executed by the so-called \emph{executors}, which represent independent processes in the Java Virtual Machine (JVM) of a \emph{worker} node. Finally, the result of all transformations is obtained by calling an \emph{action} that computes and returns the result to the \emph{driver} node. This data flow allows the user to run an indefinite number of MapReduce jobs within the same main program, supporting a wide variety of algorithms and methods.

\subsection{Fuzzy decision trees for Big Data} \label{ssec:fdts}

Decision trees (DTs)~\cite{Quinlan1993} are popular supervised machine learning algorithms used for both classification and regression. In this work we focus on classification tasks, which consist in building a model called \emph{classifier} that is able to classify unlabeled (unknown) examples (also called instances), on the basis of a training set containing previously labeled examples. Each example $x=(x_{1}, \ldots, x_{F})$ contained in the training set $TR$ belongs to a class $y \in \mathbb{C} = \{C_1,...,C_M\}$ ($M$ being the number of classes of the problem) and is characterized by a set of $F$ variables (also called attributes or features), where each variable $x_f$ can take on any value contained in the set $\mathcal{F}_f$. Therefore, the construction of a classifier consists in finding a decision function $h:\mathcal{F}_1 \times \ldots \times \mathcal{F}_F \rightarrow \mathbb{C}$ that maximizes the classification accuracy.

A DT is a directed acyclic graph where each internal node is a test on an attribute, each branch represents the outcome of the test, and each terminal node (or leaf) contains the final decision (class label). DTs are usually built by applying a top-down \emph{recursive partitioning}~\cite{Quinlan1993} of the attribute space. The selection of the attribute considered in the decision node is based on metrics that measure the difference between the level of homogeneity of the class labels contained in the parent and child nodes. For continuous attributes, either brute-force solutions or discretization strategies can be applied. The former test all the possible cut points in the training set, while the latter divide the attribute domain into a discrete set of intervals (also called \emph{bins}). Since brute-force strategies are computationally heavy, the DTs designed for Big Data usually apply discretization methods to speed up the algorithm and reduce the model complexity.

Fuzzy decision trees (FDTs)~\cite{Yuan1995,Janikow1998} make use of fuzzy logic~\cite{Zadeh1965} to better deal with uncertainty and create soft decision boundaries that improve classification performance. FDTs use fuzzy partitions to characterize continuous attributes instead of considering a discrete set of intervals. As a consequence, a given input value might belong to one or more fuzzy sets with a certain membership degree and activate multiple branches at the same time. Fuzzy partitions allow FDTs to handle smooth transitions between adjacent intervals in continuous attributes, which might lead to more accurate predictions when handling numeric data. When classifying a new example $x$, the strength of activation of each leaf (called \emph{matching degree}) is computed. To this end, the matching degree of every internal node must be calculated as well. Given the current node $CN$ that considers $x_f$ as the splitting attribute, the matching degree $md^{CN}(x)$ between $x$ and $CN$ is computed as:
\begin{equation}\label{eq:matching-degree}
md^{CN}(x)=T\left(\mu^{CN}(x_f),md^{PN}(x)\right),
\end{equation}
where $T$ is a T-norm, $\mu^{CN}(x_f)$ is the membership degree of $x_f$ to the fuzzy set associated with the node $CN$, and $md^{PN}(x)$ is the matching degree between $x$ and the parent node $PN$. Next, the \emph{association degree} $AD_m^{LN}(x)$ of $x$ with the class $C_m$ at the leaf node $LN$ is calculated as:
\begin{equation}\label{eq:association-degree}
AD_m^{LN}(x)=md^{LN}(x)\cdot w_m^{LN},
\end{equation}
where $md^{LN}(x)$ is the matching degree between $x$ and the leaf node $LN$ and $w_m^{LN}$ is the class weight associated with $C_m$ at $LN$. Different definitions have been proposed for $w_m^{LN}$ in the literature~\cite{Ishibuchi2004}. In this work we consider
\begin{equation}\label{class-weight}
w_m^{LN}=\frac{\displaystyle\sum_{x \in TR_{C_m}} md^{LN}(x)}{\displaystyle\sum_{x \in TR} md^{LN}(x)},
\end{equation}
where $TR_{C_m}$ is the set of all training examples belonging to the class $C_m$. Finally, the class label of $x$ is predicted according to different criteria, the most common being the following:
\begin{itemize}
\item \emph{Maximum matching}: the class corresponding to the maximum association degree is returned.
\item \emph{Weighted vote}: the sum of all association degrees is computed for each class. The one getting the maximum sum is predicted.
\end{itemize}

The excesive time and space requirements of FDTs can cause serious scalability issues when tackling large-scale problems. In this work, we consider the distributed solution proposed by Segatori et al. in \cite{Segatori2018} to build FDTs for Big Data, which comprises two stages:
\begin{enumerate}
\item \emph{Fuzzy partitioning}. A strong triangular fuzzy partition is constructed for each continuous attribute based on fuzzy entropy. To this end, the algorithm selects the candidate fuzzy partition that minimizes the fuzzy entropy and splits the attribute domain into two subsets in a recursive fashion, until a stopping condition is met. Although accurate, the partitions built by this methodology might contain many fuzzy sets per variable, increasing the complexity of the model.
\item \emph{FDT learning}. An FDT is built by applying one of the two splitting strategies considered by the authors: the binary (or two-way) FDT (FBDT), which always generates two child nodes, or the multi-way FDT (FMDT), which might create more than two child nodes. Another difference is that in FBDTs an attribute can appear several times in the same path. Both methods use the fuzzy information gain~\cite{Zeinalkhani2014} for the attribute selection. In this work we focus on FMDTs.
\end{enumerate}
The whole pipeline is built on top of Apache Spark and the MLlib~\cite{Meng2016} machine learning library and is publicly available at \fnurl{GitHub}{https://github.com/BigDataMiningUnipi/FuzzyDecisionTreeSpark}.

\section{Applying the probability integral transform to reduce the complexity of multi-way fuzzy decision trees}\label{sec:proposal}
In this work we propose a new distributed fuzzy partitioning method that reduces the complexity of FDTs generated by the FMDT algorithm presented in \cite{Segatori2018}. The proposed solution replaces the original partitioning method used by FMDT without altering the FDT learning algorithm. The goals of our approach are the following:
\begin{itemize}
\item To build a few fuzzy sets per attribute. The original method adds fuzzy sets to the fuzzy partition until the fuzzy information gain is below a certain threshold, increasing the complexity of the model. Our approach uses a fixed number of fuzzy sets for all attributes.
\item To adjust the fuzzy sets to the real distribution of the attributes. The proposed solution modifies both the shape and the position of the fuzzy sets to enhance the discrimination capability of the model.
\end{itemize}
In order to achieve the aforementioned goals, we propose a two-step algorithm consisting of a pre-processing stage that directly leads to a self-adaptive fuzzy partitioning process:
\begin{itemize}
\item Pre-processing: the variables of the training set are converted into standard uniform random variables by applying the \emph{probability integral transform theorem}~\cite{Angus1994,Quesenberry2004}, described in Theorem \ref{thm:integral-transform}. This theorem states that any continuous random variable can be converted into a standard uniform random variable.
\begin{theorem}\label{thm:integral-transform}
If $X$ is a continuous random variable with cumulative distribution function (CDF) $F_X(x)$ and if $Y=F_X(X)$, then $Y$ is a uniform random variable on the interval [0,1].
\end{theorem}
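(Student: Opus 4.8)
The plan is to compute the cumulative distribution function $F_Y$ of $Y = F_X(X)$ directly and verify that it coincides with the CDF of a standard uniform law, i.e. that $F_Y(y) = y$ for all $y \in [0,1]$, together with $F_Y(y) = 0$ for $y < 0$ and $F_Y(y) = 1$ for $y \geq 1$. First I would observe that, being a CDF, $F_X$ takes values only in $[0,1]$, so $Y = F_X(X)$ is automatically supported on $[0,1]$. This settles the two boundary cases at once and reduces the task to evaluating $P(F_X(X) \leq y)$ for $y \in [0,1]$.

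The next step is to rewrite the event $\{F_X(X) \leq y\}$ as an event concerning $X$ itself. In the simplest situation, where $F_X$ is strictly increasing, it admits a genuine inverse $F_X^{-1}$, the event $\{F_X(X) \leq y\}$ coincides with $\{X \leq F_X^{-1}(y)\}$, and hence
\begin{equation*}
F_Y(y) = P\left(X \leq F_X^{-1}(y)\right) = F_X\left(F_X^{-1}(y)\right) = y,
\end{equation*}
which is precisely the uniform CDF on $[0,1]$. The fact that makes the last equality hold is the continuity of $F_X$, which is guaranteed exactly because $X$ is assumed to be a continuous random variable.

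The main obstacle is that $F_X$ need not be strictly increasing: it may possess flat segments, so the ordinary inverse does not exist. To handle this I would replace $F_X^{-1}$ by the generalized inverse (quantile function) $F_X^{-1}(y) = \inf\{x : F_X(x) \geq y\}$ and establish the event equivalence $\{F_X(X) \leq y\} = \{X \leq F_X^{-1}(y)\}$ up to a set of probability zero. The key observation is that a flat piece of $F_X$ on an interval $[a,b]$ means $P(a < X \leq b) = F_X(b) - F_X(a) = 0$, so such regions are attained by the continuous variable $X$ with probability zero and therefore do not contribute to the computed probability; meanwhile the continuity of $F_X$ still forces $F_X\left(F_X^{-1}(y)\right) = y$, and the chain of equalities above goes through unchanged. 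Verifying this event equivalence and the identity $F_X\left(F_X^{-1}(y)\right) = y$ for the generalized inverse is the only delicate point; everything else is a direct substitution.
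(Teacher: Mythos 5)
Your proposal is correct, and its core is the same computation the paper uses: rewrite $\{F_X(X)\leq y\}$ as an event about $X$, apply $F_X$, and conclude $F_Y(y)=F_X\left(F_X^{-1}(y)\right)=y$. The difference is in what you assume. The paper's proof explicitly supposes $Y=g(X)$ with $g$ differentiable and strictly increasing, so that a genuine inverse $g^{-1}$ exists, and even computes the density $f_Y$ along the way (a step that is not needed for the conclusion); it therefore only covers the case where $F_X$ is strictly increasing and smooth, which is stronger than the hypothesis of the theorem (a continuous random variable may well have a CDF with flat segments). You close exactly that gap: you pass to the generalized inverse $F_X^{-1}(y)=\inf\{x: F_X(x)\geq y\}$, note that the two events $\{F_X(X)\leq y\}$ and $\{X\leq F_X^{-1}(y)\}$ can differ only on a flat piece of $F_X$, which the law of $X$ assigns probability zero, and use continuity of $F_X$ (the actual hypothesis) to get $F_X\left(F_X^{-1}(y)\right)=y$. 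You also handle the boundary values $y\leq 0$ and $y\geq 1$ explicitly. So your argument proves the theorem as literally stated, at the cost of the one genuinely delicate verification you correctly identify, while the paper's version is shorter but valid only under the additional implicit assumptions of strict monotonicity and differentiability.
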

\begin{proof}
Suppose that $Y=g(X)$ is a function of $X$ where $g$ is differentiable and strictly increasing. Thus, its inverse $g^{-1}$ uniquely exists. The CDF of $Y$ can be derived using
\begin{equation*}
\begin{split}
F_Y(y)&=Prob\left(Y \leq y\right)=Prob\left(X \leq g^{-1}(y)\right) \\
&=F_X\left(g^{-1}(y)\right)
\end{split}
\end{equation*}
and its density is given by
\begin{equation*}
\begin{split}
f_Y(y)&=\frac{d}{dy}F_Y(y)=\frac{d}{dy}F_X(g^{-1}(y)) \\
&=f_X(g^{-1}(y)) \cdot \frac{d}{dy}g^{-1}(y).
\end{split}
\end{equation*}
\noindent This procedure is called the CDF technique and allows the distribution of $Y$ to be derived as follows:
\begin{equation*}
\begin{split}
F_Y(y)&=Prob\left(Y \leq y\right)=Prob\left(X \leq F_X^{-1}(y)\right) \\
&=F_X\left(F_X^{-1}(y)\right)=y
\end{split}
\end{equation*}
\end{proof}
However, since the original distribution of the training set is unknown, we cannot compute the exact CDF. Instead, we propose computing the $q$-quantiles of the training set to obtain an approximate CDF. To this end, for each variable, all the values are sorted and each quantile is extracted. If $q$ is smaller than the number of examples in the training set, the CDF of a certain value is linearly interpolated on the interval [$Q_{i-1}$, $Q_i$], $Q_i$ being the first quantile greater than the value. If the value is smaller than the first quantile ($Q_1$) or greater than the last quantile ($Q_{q-1}$), the CDF is 0 or 1, respectively. This way, the new transformed dataset will be approximately uniform regardless of the original distribution. Of course, the transformation of the testing set is performed by interpolating the CDF using the quantiles extracted from the training set.
\item Partitioning: a Ruspini strong fuzzy partition \cite{Ruspini1969} is created by uniformly distributing a fixed number of triangular membership functions across the interval [0,1]. It is worth noting that the definition of every single fuzzy set in the original space can be recovered by applying the \emph{inverse cumulative distribution function} or \emph{quantile function}~\cite{Nair2013}. In this case, for every point defining the triangular membership function, we would linearly interpolate the corresponding value between the two closest quantiles by computing the inverse of the linear function used to compute the CDF. Figure \ref{fig:partitioning} shows an illustrative example of how fuzzy sets are distributed in the original and transformed spaces of the attribute \emph{jet\_1\_eta} and \emph{jet\_1\_phi} of HIGGS. Solid lines and bar plots represent the membership functions of the fuzzy sets and the original distribution of the variables, respectively.
\begin{figure}[!htbp]
\centering
\subfloat[Transformed space of every attribute]{\includegraphics[width=\linewidth]{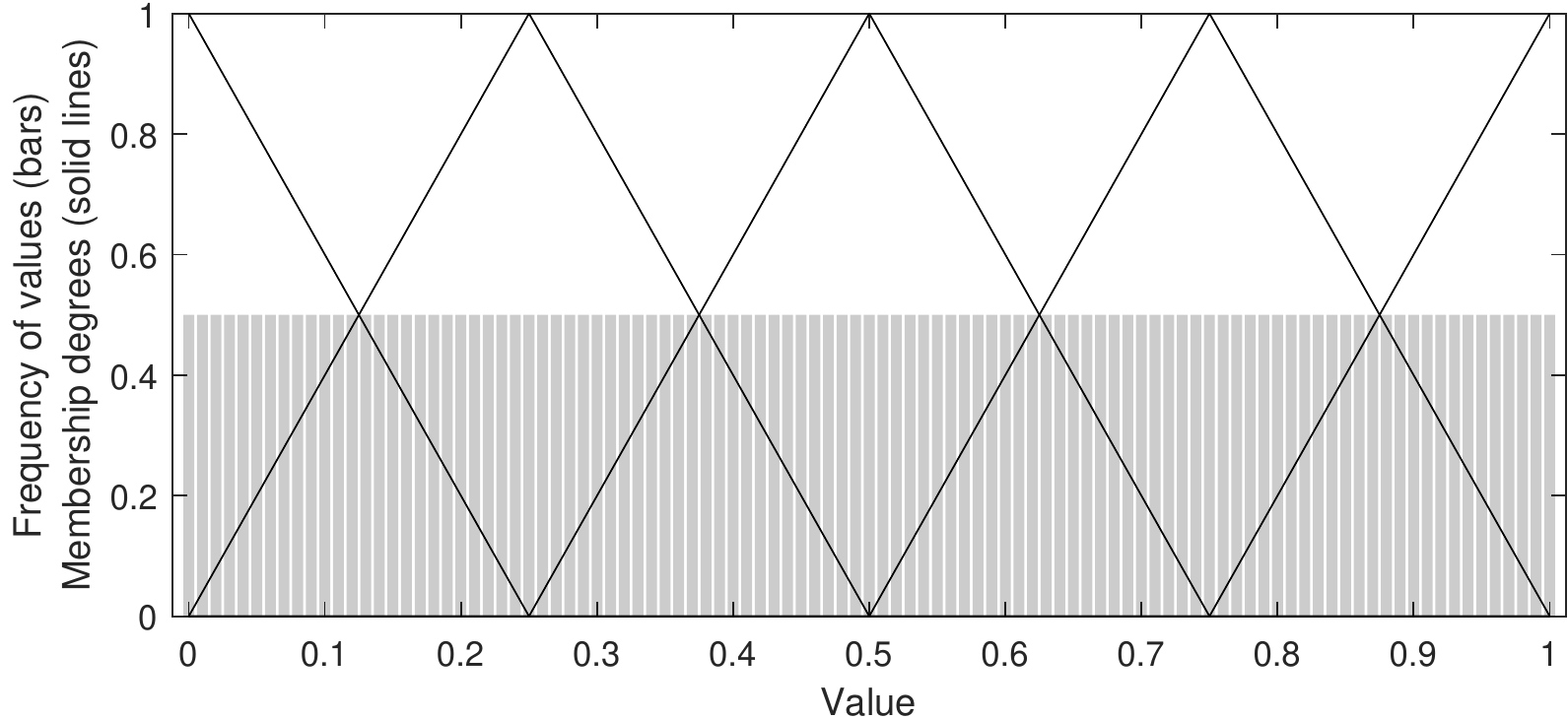}}\\\vspace{-10pt}
\subfloat[Original space of the attribute \emph{jet\_1\_eta}]{\includegraphics[width=\linewidth]{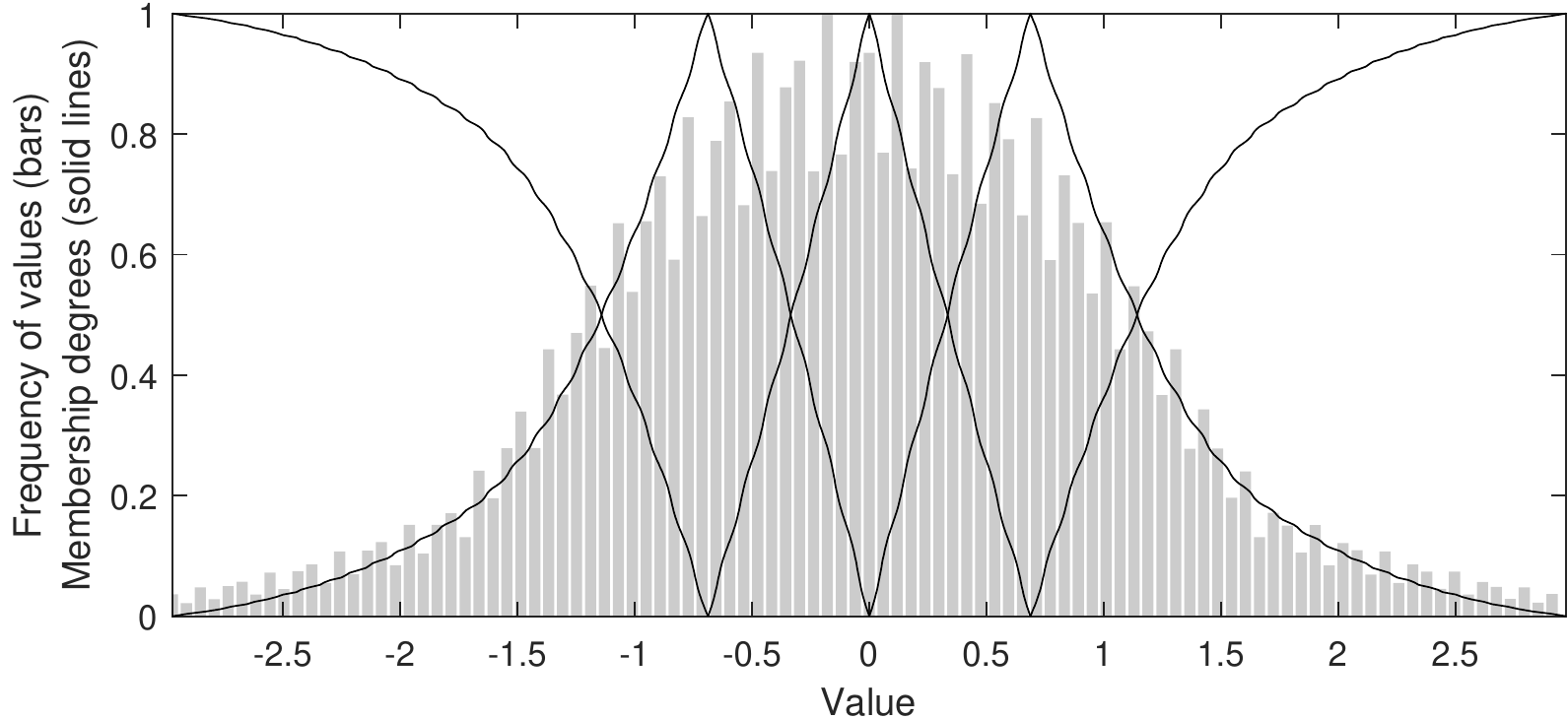}}\\\vspace{-10pt}
\subfloat[Original space of the attribute \emph{jet\_1\_phi}]{\includegraphics[width=\linewidth]{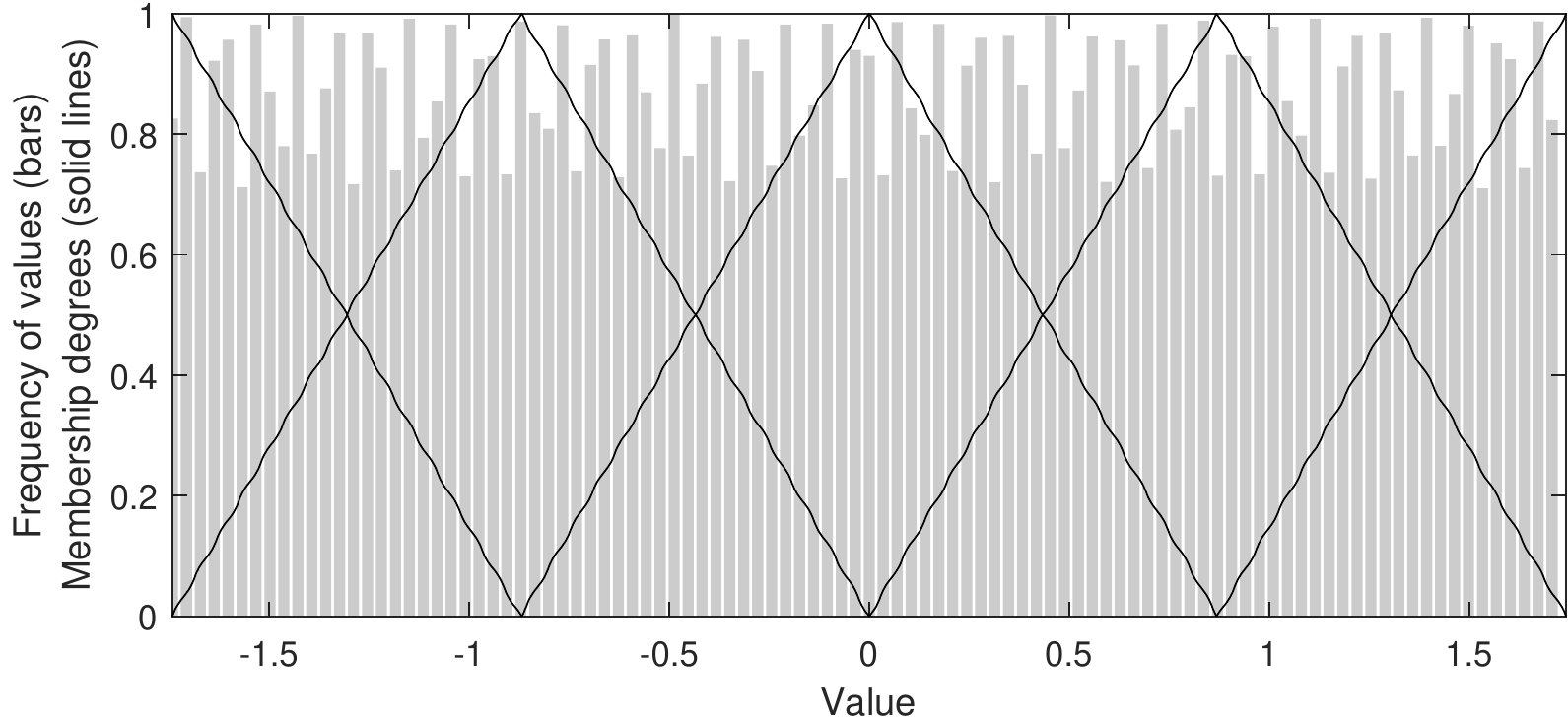}}
\caption{Fuzzy sets built for \emph{jet\_1\_eta} and \emph{jet\_1\_phi} on HIGGS.}
\label{fig:partitioning}
\end{figure}
\end{itemize}

Notice that both steps (pre-processing and partitioning) are closely interrelated. Given that, from our point of view, a Ruspini strong fuzzy partition with equally distributed membership functions is a suitable way to model a uniform distribution, our hypothesis is that if we are able to transform any attribute into a uniform distribution and likewise carry out the inverse process, we would obtain a self-adapted partition for the original distribution of each attribute. Interestingly, this result is obtained without specifically developing a new partitioning method. The whole pipeline is written in \fnurl{Scala 2.11}{http://www.scala-lang.org/} on top of Apache Spark 2.0.2 and is publicly available at \fnurl{GitHub}{https://github.com/melkano/uniform-fuzzy-partitioning} under the GPL license.
%
\section{Experimental framework}\label{sec:experimental-framework}

In this section, we first describe the datasets and performance metrics used to evaluate the methods considered in the experiments (Section \ref{ssec:datasets-performance}). Next, we specify the parameters and the environment configuration used for the executions of the algorithms (Section \ref{ssec:parameters}).

\subsection{Datasets and performance metrics} \label{ssec:datasets-performance}

In order to develop the experimental study, we considered 4 Big Data classification problems available at UCI~\cite{Dua2017} and \fnurl{OpenML}{https://www.openml.org/search?type=data} repositories. Table \ref{tab:datasets} shows the description of the datasets indicating the number of instances (\#Instances), real (R)/integer(I)/categorical(C)/total(T) attributes (\#Attributes), and classes (\#Classes). The names of BNG Australian (BNG) and HEPMASS (HEPM) have been shortened. All the experiments were carried out using a \textit{5-fold stratified cross-validation scheme}. To this end, we randomly split the dataset into five partitions of data, each one containing 20\% of the examples, and we employed a combination of four of them (80\%) to train the system and the remaining one to test it. Therefore, the result of each dataset was computed as the average of the five partitions.

\begin{table}[!htbp]\footnotesize
	\centering
	\caption{Description of the datasets.}
	\label{tab:datasets}
	\renewcommand\tabcolsep{11pt}
	\renewcommand{\arraystretch}{1.4}
	\begin{tabular}{@{}lrrrrrc@{}}
		\toprule
		Dataset & \multicolumn{1}{@{}c@{}}{\#Instances} & \multicolumn{4}{@{}c@{}}{\#Attributes} & \multicolumn{1}{@{}c@{}}{\#Classes}\\
		& & \multicolumn{1}{@{}c@{}}{R} & \multicolumn{1}{@{}c@{}}{I} & \multicolumn{1}{@{}c@{}}{C} & \multicolumn{1}{@{}c@{}}{T} &\\
		\midrule
		BNG & 1,000,000 & 8    & 6     & 0   & 14 & 2 \\
    	HEPM & 10,500,000 & 28    & 0     & 0     & 28 & 2 \\
    	HIGGS & 11,000,000 & 28    & 0     & 0     & 28 & 2 \\
    	SUSY  & 5,000,000 & 18    & 0     & 0     & 18 & 2 \\
		\bottomrule
	\end{tabular}
\end{table}

Classification performance was measured based on the so-called \emph{confusion matrix} (Table \ref{tab:confusion-matrix}), which stores the number of correctly classified and misclassified examples for each class.
\begin{table}[!htbp]
	\centering
	\caption{Confusion matrix for a binary problem.}
	\label{tab:confusion-matrix}
	\renewcommand\tabcolsep{16pt}
	\renewcommand{\arraystretch}{1.6}
	\begin{tabular}{@{}lcc@{}}
		\toprule
		& Positive prediction & Negative prediction \\
		\midrule
		Positive class & True Positive (TP) & False Negative (FN) \\
		Negative class & False Positive (FP) & True Negative (TN) \\
		\bottomrule
	\end{tabular}
\end{table}
From this matrix we can obtain the following four metrics:
\begin{itemize}
\item True positive rate: percentage of correctly classified positive examples.
\begin{equation}\label{eq:tp-rate}
TP_{rate}=\frac{TP}{TP+FN}
\end{equation}
\item True negative rate: percentage of correctly classified negative examples.
\begin{equation}\label{eq:tn-rate}
TN_{rate}=\frac{TN}{TN+FP}
\end{equation}
\item False positive rate: percentage of misclassified negative examples.
\begin{equation}\label{eq:fp-rate}
FP_{rate}=\frac{FP}{FP+TN}
\end{equation}
\item False negative rate: percentage of misclassified positive examples.
\begin{equation}\label{eq:fn-rate}
FN_{rate}=\frac{FN}{FN+TP}
\end{equation}
\end{itemize}
\noindent Based on these metrics, the classification performance of each method was measured with the well-known accuracy rate and the Area Under the ROC Curve (AUC) \cite{Huang2005}, which are defined as:
\begin{equation}\label{eq:accuracy-rate}
Accuracy\_rate = \frac{TP+TN}{TP+FN+FP+TN}
\end{equation}
\begin{equation}\label{eq:auc}
AUC=\displaystyle \frac{1+TP_{rate}+FP_{rate}}{2}
\end{equation}

\subsection{Parameters and environment configuration} \label{ssec:parameters}

As for the parameters used for FMDT, we set the values suggested by the authors in the original paper:
\begin{itemize}
\item Measure to compute the impurity of nodes: fuzzy entropy
\item T-norm: product
\item Maximum number of bins for numeric attributes: 32
\item Maximum depth of the tree: 5
\item $\gamma$ = 0.1\%;\quad$\phi$ = 0.02 $\cdot$ $N$;\quad$\lambda$ = 10$^{-4}\cdot N$
\end{itemize}

The computing cluster used for running the algorithms is composed of 6 slave nodes and a master node connected via 1Gb/s Ethernet LAN network. Half of the slave nodes have 2 Intel Xeon E5-2620 v3 processors at 2.4 GHz (3.2 GHz with Turbo Boost) with 12 virtual cores in each one (where 6 of them are physical). The other half are equipped with 2 Intel Xeon E5-2620 v2 processors at 2.1 GHz with the same number of cores as the previous ones. The master node is composed of an Intel Xeon E5-2609 processor with 4 physical cores at 2.4 GHz. All slave nodes are equipped with 64 GB of RAM memory, while the master works with 32 GB of RAM memory. With respect to the storage specifications, all nodes use Hard Disk Drives featuring a read/write performance of 128 MB/s. The entire cluster runs on top of CentOS 6.5 + Apache Hadoop 2.6.0 + Apache Spark 2.0.2.

We found that using more than 24 cores had a negative impact on runtimes when setting the configuration recommended by the authors. Consequently, the number of cores used in the experiments was 24 and we assigned 4 cores to every single executor in order to ensure full HDFS write throughput while minimizing memory replication overhead (e.g. broadcast variables).

\section{Experimental study}\label{sec:experimental-study}

In order to assess the performance of our approach, we carried out an empirical study covering three aspects: classification performance (Table \ref{tab:classification-performance}), model complexity (Table \ref{tab:complexity}), and runtimes (Table \ref{tab:runtimes}). In all cases we consider four methods: the original FMDT proposed by Segatori et al. in \cite{Segatori2018} and three different configurations of the proposed method that differ in the number of fuzzy sets ($X$) used for numeric attributes (denoted as FMDT$_X$). We must point out that the original FMDT ran out of memory while tackling HEPMASS due to the excessive number of leaves built during training, and thus no results are given for this method on HEPMASS.

Tables \ref{tab:classification-performance} and \ref{tab:complexity} reveal that the proposed fuzzy partitioning method (FMDT$_X$) is able to maintain the classification performance of FMDT while leading to significantly simpler models. The different configurations of our approach yield similar results in terms of accuracy rate and AUC (except for HIGGS), although there is a positive trend in favor of the usage of more fuzzy sets. However, using more fuzzy sets often causes the learning algorithm to build more leaves, which increases the model complexity. Next, we analyze the results obtained on each dataset separately:
\begin{itemize}
\item BNG: the proposed method improves the accuracy rate and the AUC of FMDT by 6\% and 8\%, respectively. Although the trees built by FMDT$_X$ are deeper, they have 8-80K times fewer leaves than FMDT's.
\item HEPM: the original FMDT builds too many leaves to handle this dataset on the cluster described in Section \ref{ssec:parameters} and ran out of memory during the experiments. This fact suggests that our approach is a potential solution to avoid the explosion in the number of leaves during the induction of FDTs.
\item HIGGS: the classification performance of FMDT$_5$ on this dataset drops by nearly 1\% with respect to the rest of methods, which reveals that 5 fuzzy sets are not enough to capture the complexity of this problem. However, the rest of configurations (FMDT$_7$ and FMDT$_9$) are able to maintain the classification performance of FMDT with trees composed of 15K and 50K leaves, respectively, while FMDT generates 6M leaves. Furthermore, the original fuzzy partitioning method builds almost twice as many fuzzy sets as FMDT$_7$.
\item SUSY: all the configurations perform similarly to FMDT in terms of discrimination capability. However, our method leads to simpler trees composed of 3K, 15K, and 50K leaves, while FMDT builds 5M leaves. In this case, the difference between the number of fuzzy sets used by each method is even larger, since FMDT uses nearly 23 fuzzy sets on average for each attribute.
\end{itemize}

\begin{table}[!htbp]\footnotesize
\centering
\renewcommand\tabcolsep{6pt}
\renewcommand{\arraystretch}{1.6}
\caption{Classification performance of each method.}
\label{tab:classification-performance}
\begin{tabular}{@{}lcccc@{}}
	\toprule
	\multicolumn{5}{c}{\textbf{Accuracy rate \%}}\\
	Dataset & FMDT & FMDT$_5$ & FMDT$_7$ & FMDT$_9$\\
	\midrule
	BNG & 80.23$^{\pm0.05}$ & 86.79$^{\pm0.06}$ & 86.93$^{\pm0.07}$ & 86.97$^{\pm0.06}$ \\
HEPM & -     & 91.13$^{\pm0.02}$ & 91.25$^{\pm0.02}$ & 91.33$^{\pm0.02}$ \\
HIGGS & 71.54$^{\pm0.02}$ & 70.61$^{\pm0.02}$ & 71.32$^{\pm0.03}$ & 71.69$^{\pm0.03}$ \\
SUSY  & 79.29$^{\pm0.05}$ & 79.15$^{\pm0.04}$ & 79.49$^{\pm0.04}$ & 79.66$^{\pm0.04}$ \\
	\toprule
	\multicolumn{5}{c}{\textbf{AUC}}\\
	Dataset & FMDT & FMDT$_5$ & FMDT$_7$ & FMDT$_9$\\
	\midrule
	BNG & .7896$^{\pm.0004}$ & .8649$^{\pm.0006}$ & .8658$^{\pm.0007}$ & .8662$^{\pm.0007}$ \\
HEPM & -     & .9113$^{\pm.0002}$ & .9125$^{\pm.0002}$ & .9133$^{\pm.0002}$ \\
HIGGS & .7143$^{\pm.0001}$ & .7033$^{\pm.0002}$ & .7114$^{\pm.0003}$ & .7155$^{\pm.0003}$ \\
SUSY  & .7859$^{\pm.0004}$ & .7847$^{\pm.0004}$ & .7880$^{\pm.0004}$ & .7898$^{\pm.0004}$ \\
    \bottomrule
\end{tabular}
\end{table}
\begin{table}[!htbp]\footnotesize
\centering
\renewcommand\tabcolsep{7pt}
\renewcommand{\arraystretch}{1.4}
\caption{Complexity of each model.}
\label{tab:complexity}
\begin{tabular}{@{}lrrrr@{}}
	\toprule
	\multicolumn{5}{c}{\textbf{Number of leaves}}\\
	Dataset & FMDT & FMDT$_5$ & FMDT$_7$ & FMDT$_9$\\
	\midrule
	BNG & 83,044 & 1,211 & 4,807 & 9,492 \\
    HEPM & -     & 2,854 & 13,472 & 43,339 \\
    HIGGS & 6,414,575 & 3,005 & 15,876 & 53,489 \\
    SUSY  & 5,225,134 & 2,977 & 14,989 & 49,038 \\
	\toprule
	\multicolumn{5}{c}{\textbf{Avg. depth}}\\
	Dataset & FMDT & FMDT$_5$ & FMDT$_7$ & FMDT$_9$\\
	\midrule
	BNG & 3.02  & 4.67  & 5.00  & 4.35 \\
    HEPM & -     & 4.52  & 4.03  & 3.93 \\
    HIGGS & 3.25  & 5.00  & 5.00  & 4.89 \\
    SUSY  & 3.68  & 5.00  & 5.00  & 4.76 \\
    \toprule
    \multicolumn{5}{c}{\textbf{Avg. number of fuzzy sets}}\\
	Dataset & FMDT & FMDT$_5$ & FMDT$_7$ & FMDT$_9$\\
	\midrule
	BNG & 6.04  & 5.00  & 7.00  & 9.00 \\
    HEPM & -     & 5.00  & 7.00  & 9.00 \\
    HIGGS & 13.01 & 5.00  & 7.00  & 9.00 \\
    SUSY  & 22.60 & 5.00  & 7.00  & 9.00 \\
    \bottomrule
\end{tabular}
\end{table}
\begin{table}[!htbp]\footnotesize
\centering
\renewcommand\tabcolsep{7pt}
\renewcommand{\arraystretch}{1.4}
\caption{Runtimes(s) of each method.}
\label{tab:runtimes}
\begin{tabular}{@{}lrrrr@{}}
	\toprule
	\multicolumn{5}{c}{\textbf{Partitioning}}\\
	Dataset & FMDT & FMDT$_5$ & FMDT$_7$ & FMDT$_9$\\
	\midrule
	BNG   & 58    & 41    & 40    & 40 \\
    HEPM  & -     & 295   & 292   & 294 \\
    HIGGS & 252   & 273   & 274   & 276 \\
    SUSY  & 110   & 77    & 72    & 77 \\
	\toprule
	\multicolumn{5}{c}{\textbf{Learning}}\\
	Dataset & FMDT & FMDT$_5$ & FMDT$_7$ & FMDT$_9$\\
	\midrule
	BNG   & 25    & 23    & 22    & 24 \\
    HEPM  & -     & 149   & 158   & 153 \\
    HIGGS & 4,984 & 176   & 167   & 158 \\
    SUSY  & 1,282 & 76    & 75    & 77 \\
    \toprule
    \multicolumn{5}{c}{\textbf{Total time}}\\
	Dataset & FMDT & FMDT$_5$ & FMDT$_7$ & FMDT$_9$\\
	\midrule
	BNG   & 84    & 65    & 63    & 65 \\
    HEPM  & -     & 445   & 450   & 448 \\
    HIGGS & 5,238 & 450   & 441   & 435 \\
    SUSY  & 1,392 & 154   & 148   & 155 \\
    \bottomrule
\end{tabular}
\end{table}

Table \ref{tab:runtimes} shows the time required by each method to perform three different stages: the partitioning process, the FDT induction, and the whole learning algorithm. In general, there are no significant differences among the methods when it comes to the partitioning stage, though the proposed algorithm is 30\% faster than the original method on SUSY. However, when the FDT induction is considered, the reduction in model complexity coming from the proposed fuzzy partitioning algorithm results in much faster runtimes.

\section{Concluding remarks}\label{sec:conclusions}

In this work we have presented a new distributed fuzzy partitioning method that reduces the model complexity of multi-way fuzzy decision trees (FDTs) in Big Data classification problems. The proposed algorithm consists in transforming the original training set in such a way that all numeric variables follow an approximately standard uniform distribution. To this end, the \emph{probability integral transform} is applied, which states that any continuous random variable can be converted into a standard uniform random variable based on the original cumulative distribution function (CDF). Since the CDF is generally unknown, we approximate this function by computing the $q$-quantiles of the training set and linearly interpolating between such quantiles. After this transformation, Ruspini strong fuzzy partitions are created by equally distributing a fixed number of triangular membership functions across the [0,1] interval. To recover the points defining the fuzzy sets in the original space, the \emph{inverse cumulative distribution function} or \emph{quantile function} can be applied. The proposed two-step partitioning process is able to adjust both the position and shape of fuzzy sets to the real distribution of training data.

In order to test the performance of our approach, we carried out an empirical study focused on the MapReduce FDT induction algorithm introduced by Segatori et al. for Big Data. To this end, we replaced the fuzzy partitioning method used in the original paper with the proposed algorithm, without modifying the FDT learning stage. The experimental results reveal that the proposed methodology leads to simpler FDTs that maintain classification performance while providing much faster runtimes.

\section*{Acknowledgment}

This work has been supported by the Spanish Ministry of Science and Technology under the project TIN2016-77356-P.

\ifCLASSOPTIONcaptionsoff
  \newpage
\fi

\bibliographystyle{IEEEtran}

\end{document}